\newtheorem{prop}{Proposition}
\title{Faster Clustering via Non-Backtracking Random Walks}
\author{
Brian Rappaport\thanks{First and second authors contributed equally. Research supported by the National Science Foundation Research Experience for Undergraduates (NSF-REU).} \\
Department of ECE\\
Tufts University\\
\texttt{brian.rappaport@tufts.edu} \\
\And
Anuththari Gamage$^*$ \\
Department of ECE\\
Tufts University\\
\texttt{anuththari.gamarallage@tufts.edu} \\
\AND
Shuchin Aeron \\
Department of ECE\\
Tufts University\\
\texttt{shuchin@ece.tufts.edu} 
}
\begin{document}

\maketitle

\begin{abstract}
This paper presents VEC-NBT, a variation on the unsupervised graph clustering technique VEC, which improves upon the performance of the original algorithm significantly for sparse graphs. VEC employs a novel application of the state-of-the-art word2vec model to embed a graph in Euclidean space via random walks on the nodes of the graph. In VEC-NBT, we modify the original algorithm to use a non-backtracking random walk instead of the normal backtracking random walk used in VEC. We introduce a modification to a non-backtracking random walk, which we call a begrudgingly-backtracking random walk, and show empirically that using this model of random walks for VEC-NBT requires shorter walks on the graph to obtain results with comparable or greater accuracy than VEC, especially for sparser graphs.
\end{abstract}

\section{Introduction}

The word2vec algorithm \cite{word2vec} has become one of the most commonly used models for natural language processing, being both faster and more accurate than most other choices for embedding words into Euclidean space. In the word2vec model, an input corpus of sentences is used to create a co-occurrence matrix of words in the vocabulary, and the word vectors are optimized using a cost function related to the adjacencies of words to one another, a process known as Skip-Gram with Negative Sampling (SGNS). The resulting vectors can be used to perform various natural language processing tasks such as analogy prediction and sentiment analysis.

This process is not limited to natural language processing. Identifying communities in a graph requires the definition of some measure of similarity between nodes of a graph. The VEC algorithm proposed by Ding et al. \cite{NodeEmbed} quantifies this similarity by considering the nodes of a graph as words contained in sentences formed by random walks on the graph. Once this word-sentence representation is obtained, identifying graph communities is analogous to finding semantic or syntactic similarity between words in a language, since these similarities are often defined by the frequency of word pairs or groups in sentences. Starting at each node in the graph, VEC performs several random walks of fixed length. Using the skip-gram method, where two words are considered adjacent if they are within a certain distance of each other in one or more of the sentences, each sentence adds several node-pairs to the co-occurrence matrix of the list of nodes. Word2vec can be used to convert the random walks into node embeddings in Euclidean space, which is in effect a factorization of the co-occurrence matrix, as explained in \cite{LevyGoldberg}.

The resulting vectors can then be clustered using simple techniques such as k-means clustering. The clusters identified by this method have been shown to improve considerably on those generated by classic algorithms such as spectral clustering and acyclic belief propagation \cite{NodeEmbed}. However, this algorithm performs less effectively on extremely sparse graphs: if the average degree of the nodes is lower than 3, the algorithm cannot reliably determine the clusters. Unfortunately, many real-world graphs have this feature, such as protein-protein interaction graphs, our motivating example.

We attempt to address these problems with the novel concept of using non-backtracking random walks to form the sentences, which encourages more homogeneous clusters. A non-backtracking random walk is a random walk which does not return to the node which it visited in the previous step. Intuitively, this idea is logical: backtracking does not add any new information to the algorithm, so reducing the number of repeated edges should produce better embeddings, which in turn give better clusters. Several groups have conducted theoretical explorations of non-backtracking random walks, which give a more formal justification to this claim \cite{Redemption, Alon, NBT-Ihara}. We discuss some of the key ideas of these papers as well as show experimentally the clear improvement in performance which results from using non-backtracking random walks.







\section{Description}

\subsection{Notation}

Let $G = (V,E)$ be a graph with vertex set $V$ and edge set $E$, and $|V| = n, |E| = m$. The \emph{adjacency matrix} $A$ of $G$ is defined as the $n\times n$ matrix with $a_{u,v} = 1$ if and only if $(u,v)\in E$, and the \emph{degree matrix} $D$ of $G$ is the $n \times n$ diagonal matrix indexed by $v$ with each diagonal element equal to the degree of vertex $v$. A \emph{random walk} on $G$ is defined as a sequence of vertices $(v_0,v_1,...,v_k)$, each connected by an edge in $E$, where at each step the next vertex is chosen randomly from those neighboring the current step with equal probability. A random walk on $G$ is also a Markov process, where the \emph{transition probability matrix} $P = D^{-1}A$, which gives $x_kP = x_{k+1}$ for starting distribution $x_0$ and $k>0$. This recurrence gives the closed form expression $x_t = x_0P^t$. We can also determine the \emph{stationary distribution} $\pi(v)$, defined as the distribution such that $\pi P = \pi$, as $dv/vol(G)$, the degrees of the nodes divided by the total number of edges. For any non-bipartite connected graph $G$, the stationary distribution is the limit $\displaystyle\lim_{t\rightarrow\infty}{x_0P^t}$ \cite{Lovasz}.

The \emph{graph Laplacian} $L$ is defined as $D - A,$ and is often represented as one of two normalized forms, $L_{sym} = I - D^{-1/2}AD^{-1/2}$ and $L_{rw} = I - D^{-1}A$: $L_{rw}$ is closely related to $P$ (it is in fact $I-P$), and $L_{sym}$ is a symmetric matrix which is similar to $L$ and so has the same eigenvalues. The eigenvalues of $L_{rw}$ are also closely related to those of $L$ \cite{Luxburg}. The \emph{mixing rate} of a graph, defined as 
$$\rho = \limsup_{t\rightarrow\infty}\max_{u,v}\left|P^{t}(u,v)-\pi(v)\right|^{1/t},$$ 
defines how quickly a Markov chain with transition probabilities $P$ reduces to the stationary distribution $\pi(v)$, where a lower mixing rate indicates a faster mixing time. The mixing rate is intimately connected to the second eigenvalue of the graph Laplacian, as detailed in \cite{Lovasz}. A Markov chain is \emph{irreducible} if every node can be accessed from every other node; it is \emph{aperiodic} if every node can have a cycle of any length (for instance, a bipartite graph is not aperiodic since all cycles for any node will have even length). A Markov chain is \emph{ergodic} if it is aperiodic and irreducible. 

A \emph{non-backtracking random walk} is defined as a random walk that chooses its next step from all neighbors except the one it visited in the previous step. A \emph{begrudgingly-backtracking random walk} is the same, with the added condition that if the only choice of edge is the one visited previously, then it will resort to backtracking for that edge. Begrudgingly-backtracking random walks handle several problems caused by non-backtracking random walks especially on sparse graphs: in a non-backtracking random walk, walking to a dangling node (a node with only one connecting edge) forces the walk to end there, resulting in an artificially decreased length of the random walk. In addition, the dangling nodes will be weighted less heavily than they should be, since they will only ever be visited once per walk.

\subsection{Algorithm}

The graphs used to measure the performance of VEC-NBT are synthesized using the Stochastic Block Model (SBM), a canonical graph model used for community detection. SBM builds off of the classical Erdos-Renyi $G(n,p)$ random graph model, where each edge of a graph with $n$ nodes is formed with probability $p$. In SBM, specifically the planted partition model, $G(n,p)$ is additionally given $k$ clusters and each node is added to one of the clusters with equal probability. Edges are formed within the cluster with probability $a$ while inter-cluster edges are formed with probability $b$, with $b$ lower than $a$ in order to form an assortative graph. We have elected to use graphs with constant scaling, $b = \lambda a$. In our model, the probability of an intra-cluster edge being formed is $Q_n(k,k) = a = \frac{c}{n}$ while that of an inter-cluster edge being formed is $Q_n(k,k^\prime) = b = \frac{c(1-\lambda)}{n}$. $c$ is the average degree of nodes within the cluster and determines the sparsity of the graph. $\lambda$ determines how connected the various clusters are: $\lambda=1$ would imply completely disjoint clusters, while $\lambda=0$ draws no distinction between clusters.  Thus, our model can be represented as $G(n,k,c,\lambda)$, which fully determines the graph.

To generate a Euclidean embedding of the nodes of the graph, VEC performs $r = 10$ random walks per node, each with length $l = 60$. Since the graphs are unweighted, each neighbor is equally likely to be chosen. Each random walk is converted into a sentence by counting the number of pairwise co-occurrences for all node pairs: for the co-occurrence matrix $W$, $W_{i,j}$ is the total number of times node $j$ occurs within $w=8$ places of node $i$ in all random walks (this defines a skip-gram with window size 8). Note that an isolated node (a node with a degree of 0) will not be part of any sentence, since the random walk will not reach it and random walks starting there will not be included in the corpus. This observation makes logical sense because a node with no connections cannot be said to be in any cluster, since there is no information about the node at all.

Once the sentences have been formed, an existing implementation of a popular word embedding algorithm, namely, word2vec \cite{word2vec}, is used to convert the sentences into vectors. word2vec takes the corpus of random walks and embeds them into $d$-dimensional space by means of a stochastic gradient descent algorithm. An embedding dimension of $d=50$ is used in our tests to correspond to the original parameters used for VEC. Finally, the embeddings are clustered using a standard k-means clustering scheme.

\begin{algorithm}
    \caption{VEC-NBT}
    \SetKwInOut{Input}{Input}
    \SetKwInOut{Output}{Output}

    \Input{Graph $G$, Number of clusters $k$}
    \Output{Estimated label vector $y$}
    
    \For{\emph{node }$v \in V,\; t \in \{1 \dots r\}$}
        {$S_{v,t} := $ begrudgingly-backtracking random walk of length $l$ starting at $v$}
    $\{U_i\}_{i=1}^{n} := $ embedding vector for each node generated via word2vec using $S$\\
    $y := $ clusters given by K-Means on rows of $U$
\end{algorithm}

VEC is shown to have consistently performed better than standard community detection methods, such as spectral clustering and acyclic belief propagation, both in accuracy and robustness to random initialization of the graph \cite{NodeEmbed}. However, accuracy is still lower than desirable for very sparse graphs. In addition, although 60 is not an excessively long random walk, reducing it would speed up the algorithm.

We propose replacing the simple random walk in VEC with a begrudgingly-backtracking random walk, through which we find that both the accuracy and the runtime of VEC can be improved while using shorter random walks. By removing the possibility of revisiting an already encountered node, the begrudgingly-backtracking random walk diffuses over the nodes of the graph faster than a backtracking random walk due to its faster mixing rate \cite{Alon,NBT-Ihara}. Thus, a shorter begrudgingly-backtracking random walk can identify the community structure of sparse graphs better than a simple random walk of greater length. This is a classic example of exploration vs. exploitation in machine learning and statistics.

\section{Theory}

VEC-NBT produces significantly better clustering than VEC consistently across a range of graph sparsity levels and number of nodes. This improvement in performance can be attributed to the faster mixing rate of a begrudgingly-backtracking random walk compared to a simple random walk.

Consider a simple random walk on a non-bipartite graph $G$  where each node $u$ has degree $d_u$. This is a first-order Markov chain with a transition probability matrix $P$ as follows \cite{Lovasz}.
\[
    P(u,v) = 
    \begin{cases} 
        \displaystyle\frac{1}{d_u} & \text{if } uv \in E \\
        0 & \text{otherwise} 
    \end{cases}
\]

As stated in Section 2.1, the stationary distribution for a random walk on $G$ is given by $\displaystyle\pi(v) = \frac{d_v}{vol(G)}$. However, a non-backtracking random walk on a graph $G$ is a second-order Markov chain. We impose the additional constraint that $G$ has a minimum degree of 2 for all nodes. To convert this random walk into a first-order Markov chain, the transition probability matrix $\tilde P$ is defined on the directed edge set of the graph instead of the vertex set \cite{NBT-Ihara}. $\tilde P$ is a $2m \times 2m$ matrix with $\tilde P((u,v), (x,y))$ representing the transition probability between edge $(u,v)$ to edge $(x,y)$ such that
\[\tilde P((u,v), (x,y)) = 
    \begin{cases} 
        \displaystyle\frac{1}{d_v-1} & \text{if } v = x, y \neq u\\
        0 & \text{otherwise}
    \end{cases}
\]

Note that $\tilde P$ is doubly stochastic. It is proven in \cite{NBT-Ihara} that since $\tilde P$ is irreducible and aperiodic, the non-backtracking random walk converges to the stationary distribution
$$\tilde \pi = \frac{\mathds{1}}{vol(G)}$$
where $\mathds{1}$ is the unit vector of length $2m$.

As discussed in \cite{NBT-Ihara}, the mixing rate of the backtracking and the non-backtracking random walk is equal to the second largest eigenvalues of their respective transition probability matrices, $P$ and $\tilde P$. Thus, if $\rho$ and $\tilde \rho$ are the mixing times of the backtracking and non-backtracking random walks respectively and $\lambda_2$ is the second largest eigenvalue of the adjacency matrix of a $d$-regular graph $G$, we have 
$$\rho = \frac{\lambda_2}{d}$$
and
$$\tilde \rho = \frac{\lambda_2 + \sqrt{\lambda_2^2 - 4(d-1)}}{2(d-1)}.$$

Furthermore, \cite{NBT-Ihara, Alon} prove that the non-backtracking random walk has a faster mixing rate than a backtracking random walk, yielding the following bounds:

\begin{flushleft}
For $2\sqrt{d-1} \leq \lambda_2 \leq d:$
\end{flushleft}
$$\frac{d}{2(d-1)} \leq \frac{\tilde \rho}{\rho} \leq 1$$

\begin{flushleft}
For $\lambda_2 \leq 2\sqrt{d-1} \text{ and } d = n^{o(n)}:$
\end{flushleft}
$$\frac{\tilde \rho}{\rho} = \frac{d}{2(d-1)} + o(1)$$\\

In VEC-NBT, we use a begrudgingly-backtracking random walk, a variation of the non-backtracking random walk. Let $\hat P$ be a $2m \times 2m$ matrix defining the transition probabilities of the begrudgingly-backtracking random walk on the edge set of a graph $G$ such that
\[\hat P((u,v), (x,y)) = 
    \begin{cases} 
        \displaystyle\frac{1}{d_v-1} & \text{if } v = x, y \neq u\\
        1 & \text{if } v = x, y = u, d_v = 1 \\
        0 & \text{otherwise}
    \end{cases}
\]

\begin{prop}
$\hat P$ is doubly stochastic.
\end{prop}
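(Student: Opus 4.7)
The plan is to verify directly that both the row sums and the column sums of $\hat P$ equal one, splitting into cases according to the degree of the relevant endpoint. Because $\hat P$ is indexed by directed edges, the structure is already set up so that only in-edges and out-edges of a single vertex contribute to each sum, which reduces the problem to a counting check.

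First I would check the row sum. Fix a directed edge $(u,v)$; then $\hat P((u,v),(x,y))$ is nonzero only when $x=v$, so the sum collapses to a sum over directed edges leaving $v$. If $d_v \geq 2$, the first (non-backtracking) clause applies to the $d_v-1$ out-edges $(v,y)$ with $y\neq u$, each contributing $\frac{1}{d_v-1}$, giving a total of $1$; the begrudging clause does not trigger because $d_v \neq 1$. If instead $d_v = 1$, then $u$ is the unique neighbor of $v$, so the only out-edge is $(v,u)$, and the begrudging clause assigns this entry the value $1$. Either way the row sums to $1$.

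Next, the column sum. Fix a directed edge $(x,y)$; nonzero entries $\hat P((u,v),(x,y))$ again force $v=x$, so the sum runs over the in-edges $(u,x)$, i.e.\ over neighbors $u$ of $x$. When $d_x \geq 2$, each neighbor $u\neq y$ contributes $\frac{1}{d_x-1}$ via the first clause, and there are exactly $d_x-1$ such neighbors because $y$ itself is a neighbor of $x$; the neighbor $u=y$ falls into the \emph{otherwise} case and contributes $0$, since the begrudging clause requires $d_x=1$. When $d_x=1$, $y$ must be the unique neighbor of $x$, the only in-edge is $(y,x)$, and the begrudging clause contributes $1$.

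I do not anticipate a genuine obstacle: once the case split on $d_v$ (for rows) and $d_x$ (for columns) is in place, both sums reduce to immediate counting. The one spot that needs care is confirming that the three cases in the piecewise definition are mutually exclusive, so that the non-backtracking and begrudging clauses never double-count on a single out-edge; checking this amounts to noting that the begrudging clause is gated by $d_v=1$, in which case no $y\neq u$ exists to activate the first clause.
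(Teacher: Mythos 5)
Your proof is correct, and it takes a more self-contained route than the paper's. The paper argues by reduction: it invokes the (cited) fact that $\tilde P$ is doubly stochastic and then only inspects the rows and columns that differ between $\tilde P$ and $\hat P$, namely those indexed by directed edges incident to dangling nodes, checking that each such row and column contains a single entry equal to $1$. You instead verify every row sum and column sum directly, with a clean case split on $d_v$ (for rows) and $d_x$ (for columns); your degree-$\geq 2$ cases re-derive what the paper imports from the double stochasticity of $\tilde P$, and your degree-$1$ cases coincide with the paper's check of the changed entries. Your approach buys something real here: the paper's reduction is slightly delicate because $\tilde P$ is only defined on graphs of minimum degree $2$, so "the elements that change from $\tilde P$ to $\hat P$" requires interpreting $\tilde P$ on a graph where it is not actually stochastic (its dangling-node rows would sum to $0$, not $1$). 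Your direct verification sidesteps that issue entirely, and your closing observation that the begrudging clause is gated by $d_v=1$ (so the two nonzero clauses can never both fire on the same entry) is a worthwhile point the paper leaves implicit.
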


\begin{proof}
Since $\tilde P$ has been shown to be doubly stochastic, we only need consider those elements that would change from $\tilde P$ to $\hat P$, and these are exactly the rows and columns corresponding to any dangling nodes. Let $x$ be a node of $G$ with degree 1, connected only to node $y$; then the single row going to $x$, $(y,x)$, has only the element returning to $y$, $(x,y)$, which has a weight of 1; similarly, the single column coming from $x$ has only the element that sent it there. 
\end{proof}

Hence, $\tilde P$ is the same as $\hat P$, except in the rows and columns of degree-1 nodes: where $\tilde P$ has rows and columns of zeros, $\hat P$ has a 1 where the edge can be included. Since $\tilde P$ is only defined on graphs where the minimal degree is greater than 1, $\tilde P$ is still doubly stochastic, and $\hat P$ extends the definition to graphs with singlet connectivity. In this way we can relax the requirement that all elements of $G$ have minimal degree 2.

We can also show that $\hat P$ is also irreducible and aperiodic, provided $\hat P$ has a single connected component and at least one element with degree greater than 1: clearly if $\tilde P$ is irreducible, extending the graph to include single edges that can be traveled down and back will not make the chain reducible; and it can be shown that an irreducible Markov chain with at least one aperiodic node is aperiodic and $\tilde P$ is aperiodic, so adding more nodes will not change that result. Therefore, by the same argument made for $\tilde \pi$, we can see that the stationary distribution for the begrudgingly-backtracking random walk is
$$\hat \pi = \frac{\mathds{1}}{vol(G)}$$

Thus, we can think of the begrudgingly-backtracking random walk as a variant of the non-backtracking random walk which only requires the graph $G$ to have $d_u \geq 1$ for each node $u$. Given the similarities between the two random walks, we hypothesize that the begrudgingly-backtracking random walk has a mixing rate that is equal or similar to that of the non-backtracking random walk, which explains the fast convergence and greater accuracy observed in our experiments. Furthermore, we suspect that even though the faster mixing rate of the non-backtracking random walk is proven only for a $d$-regular graph, it seems to hold in our experiments since the parameter $c$ used to generate the SBM graphs ensures that each node has a constant average degree. These two points are yet to be fully explored and formalized. 

\section{Numerical Results}

We compare the performance of VEC and VEC-NBT on SBM graphs generated using the parameters given through two metrics: Correct Classification Rate (CCR) and Normalized Mutual Information (NMI). For random walks and embedding, VEC-NBT uses the same parameters used by VEC with the exception of the length of the random walk ($l=5, 10$) and the window size ($w=5$) and twice the number of random walks ($r=20$). Here, we show empirically that VEC-NBT consistently achieves better accuracy than VEC for sparser graphs (low values of $c$) and comparable accuracy to VEC at higher sparsity levels.

CCR is defined as the number of correctly classified points divided by the total number of nodes. To ensure the calculated clustering matches the ground truth, we use a linear sum assignment to match the cluster assignments to the original labels. Because of this, CCR is defined (at least for 2 clusters) only between 0.5 and 1, since a measured CCR of 0 would indicate that every 1 was labeled as a 2 and vice versa - which is in fact a perfect clustering. In our graphs we have plotted CCR as a percentage between $1/K$ and 1 where $K$ is the number of clusters.

NMI is defined as the mutual information $I(X;Y)$ normalized by the square root of the entropies $H(X)$ and $H(Y)$:
$$NMI(X,Y) := \frac{I(X;Y)}{\sqrt{H(X)H(Y)}} = \frac{H(X)+H(Y)-H(X,Y)}{\sqrt{H(X)H(Y)}},$$
a more technical metric measuring the information content of the resulting labels.

Figures are shown with the original VEC algorithm (``BT") with solid lines and our new algorithm (``NBT") with dashed lines. CCR and NMI are shown for each algorithm on each plot. Note that red points correspond to CCR measurements, on a 50-100\% scale, and blue points correspond to NMI measurements, between 0 and 1. NMI tends to be a more accurate indicator of performance.

Unless otherwise specified, the x-axis is the sparsity of the graph, varying from 2 to 20; the number of clusters is 2; the graph has 10000 nodes; and the walks are 10 steps long.

\begin{figure}[H]
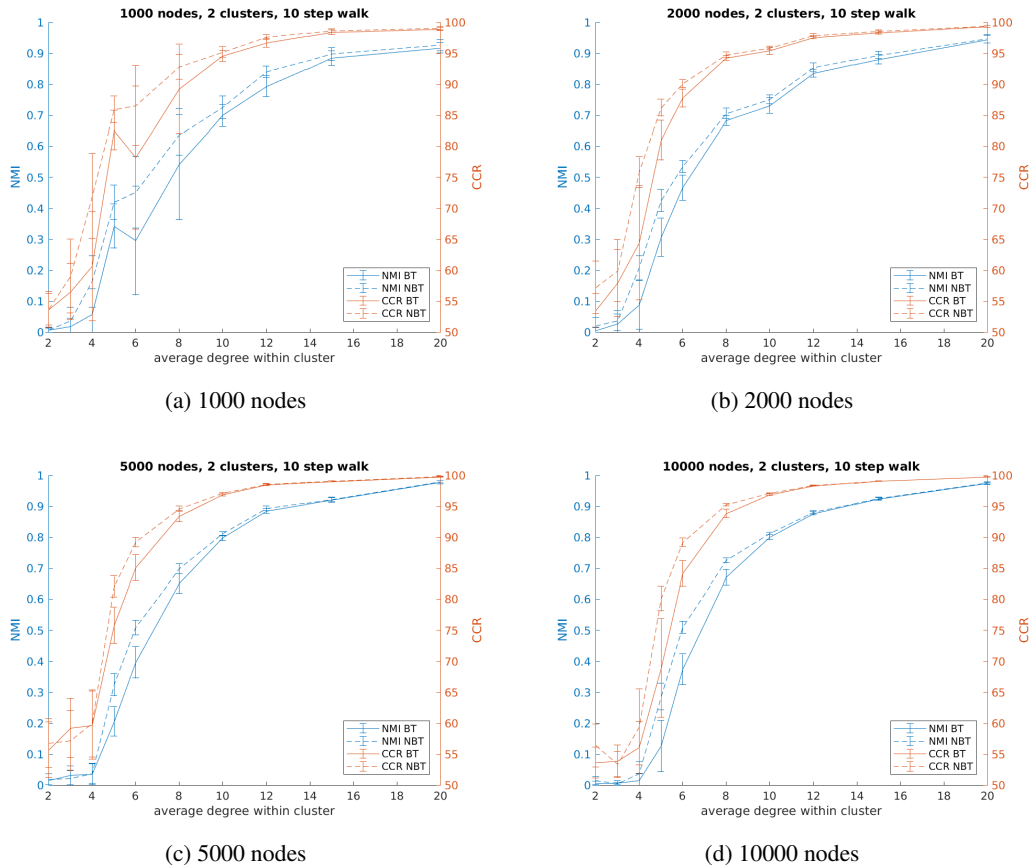
 
    \centering
    \begin{subfigure}{0.48\textwidth}
        \includegraphics[width=\linewidth]{fig1/N1000K2len10} 
        \caption{1000 nodes}
        \label{fig:subim11}
    \end{subfigure}
    \hfill
    \begin{subfigure}{0.48\textwidth}
        \includegraphics[width=\linewidth]{fig1/N2000K2len10}
        \caption{2000 nodes}
        \label{fig:subim12}
    \end{subfigure}
    \vskip \baselineskip
    \begin{subfigure}{0.48\textwidth}
        \centering
        \includegraphics[width=\linewidth]{fig1/N5000K2len10}
        \caption{5000 nodes}
        \label{fig:subim13}
    \end{subfigure}
    \hfill
    \begin{subfigure}{0.48\textwidth}
        \centering
        \includegraphics[width=\linewidth]{fig1/N10000K2len10}
        \caption{10000 nodes}
        \label{fig:subim14}
    \end{subfigure}
    \caption{\emph{Performance of both algorithms as a function of sparsity. We show performance for four differently-sized graphs. Note that measurement performance is noticeably better for VEC-NBT than for VEC.}}
    \label{fig:image1}
\end{figure}

\begin{figure}[H] 
        \begin{subfigure}{0.33\textwidth}
        \includegraphics[width=\linewidth]{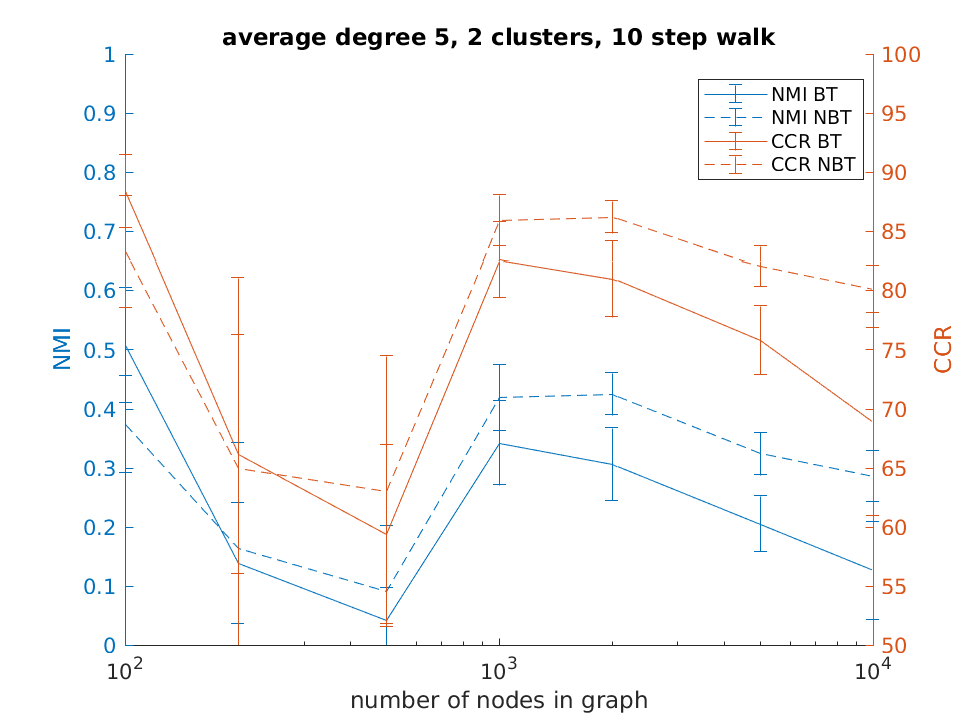} 
        \caption{average degree 5}
        \label{fig:subim21}
    \end{subfigure}
    \begin{subfigure}{0.33\textwidth}
        \includegraphics[width=\linewidth]{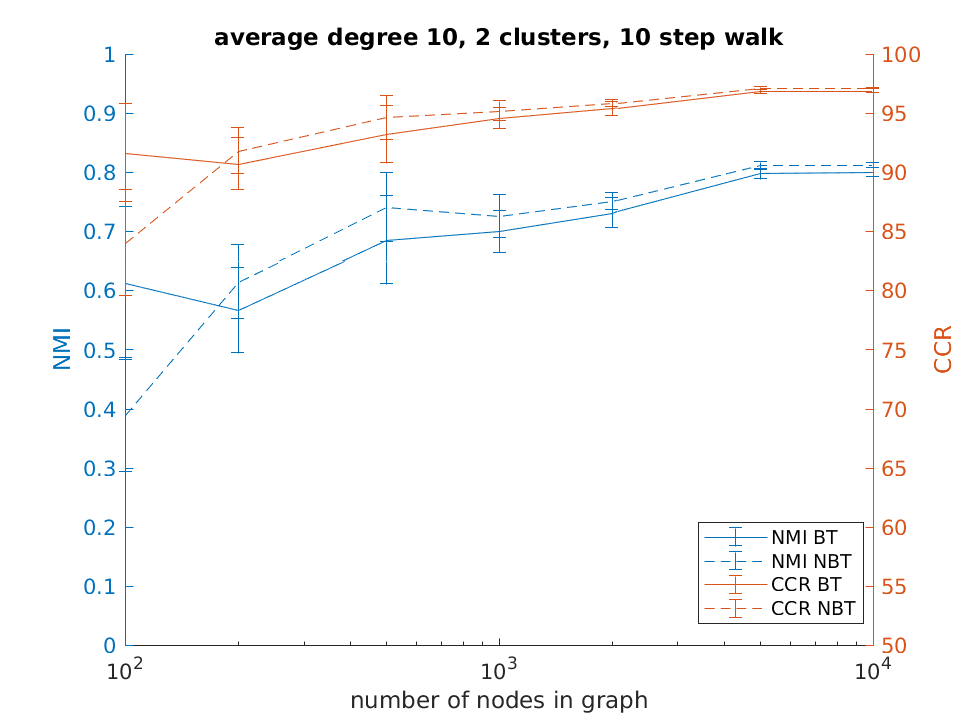}
        \caption{average degree 10}
        \label{fig:subim22}
    \end{subfigure}
    \begin{subfigure}{0.33\textwidth}
        \centering
        \includegraphics[width=\linewidth]{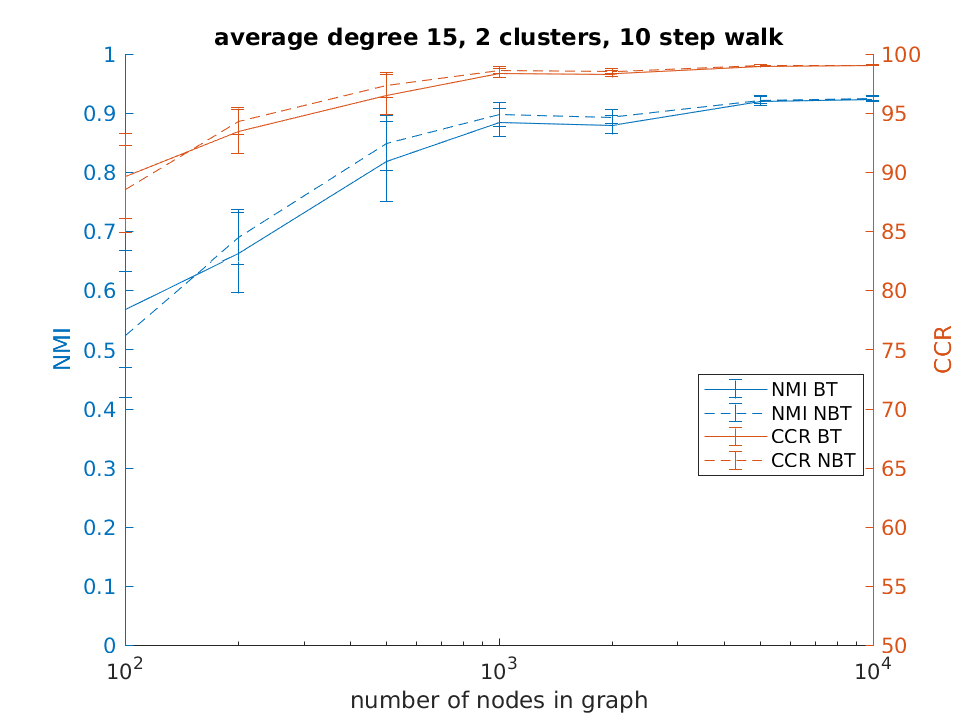}
        \caption{average degree 15}
        \label{fig:subim23}
    \end{subfigure}
    \caption{\emph{Performance of both algorithms as the number of nodes increases from 100 to 10000 for three graphs of different sparsity. The x-axis here is the number of nodes and the average degree is constant for each graph.}}
    \label{fig:image2}
\end{figure}

\vskip \baselineskip
\vskip \baselineskip

\begin{figure}[H] 
        \begin{subfigure}{0.33\textwidth}
        \includegraphics[width=\linewidth]{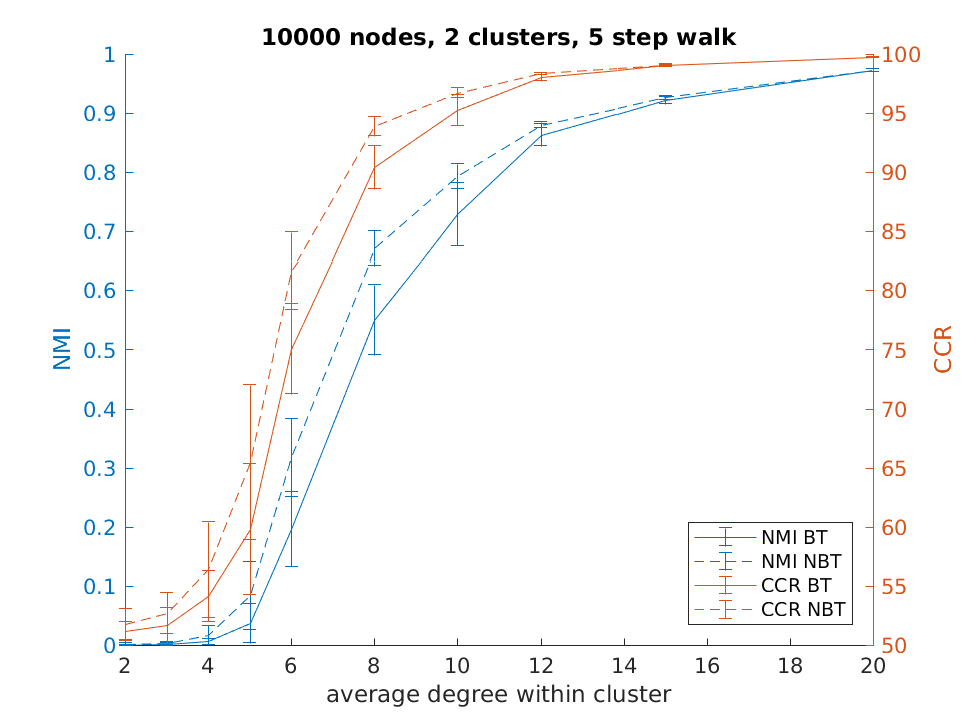} 
        \caption{5 step random walks}
        \label{fig:subim31}
    \end{subfigure}
    \begin{subfigure}{0.33\textwidth}
        \includegraphics[width=\linewidth]{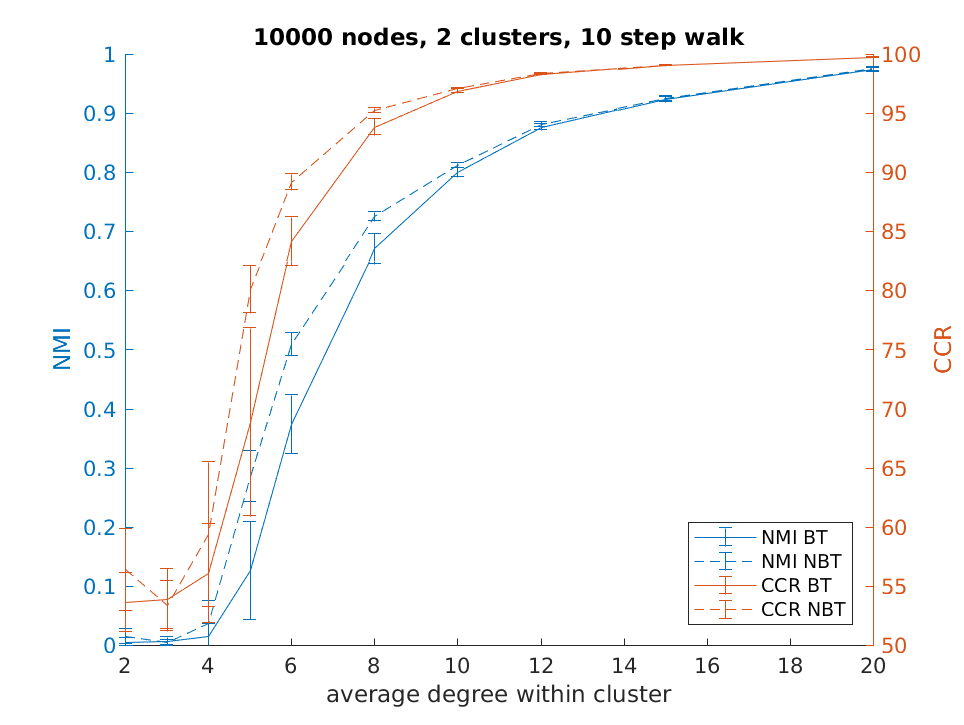}
        \caption{10 step random walks}
        \label{fig:subim32}
    \end{subfigure}
    \begin{subfigure}{0.33\textwidth}
        \includegraphics[width=\linewidth]{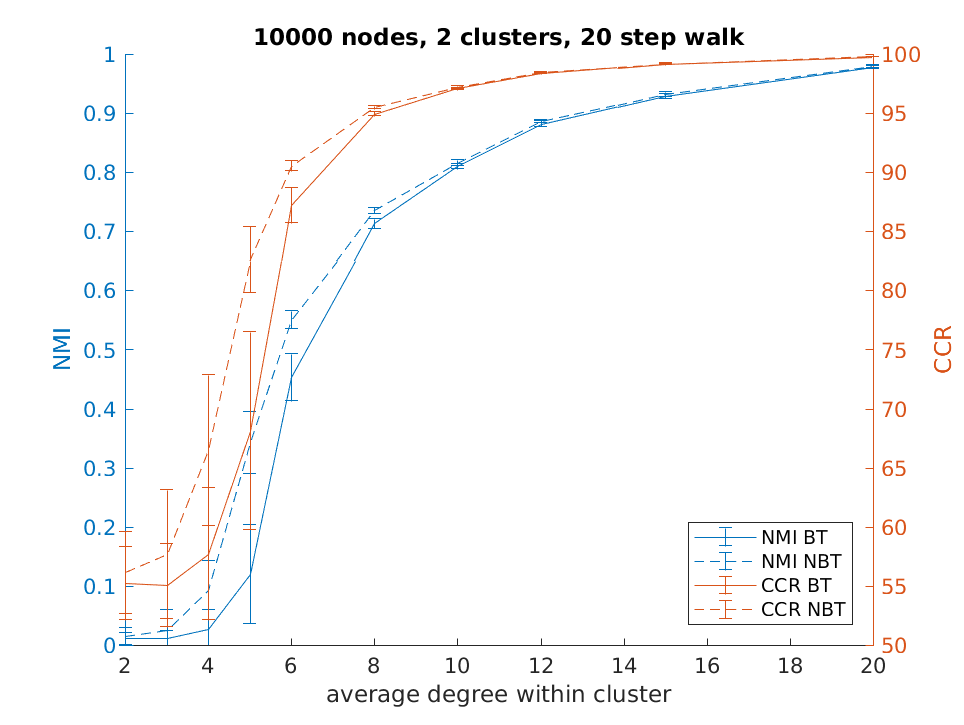}
        \caption{20 step random walks}
        \label{fig:subim33}
    \end{subfigure}
    \caption{\emph{Performance of both algorithms as the random walks increase in length, from 5 to 20 steps, plotting as a function of sparsity, showing that the curve moves rightward as the length of the walks increase. The improvement between backtracking and non-backtracking random walks is more visible for shorter walks.}}
    \label{fig:image3}
\end{figure}

\vskip \baselineskip
\vskip \baselineskip

\begin{figure}[H] 
    \begin{subfigure}{0.33\textwidth}
        \includegraphics[width=\linewidth]{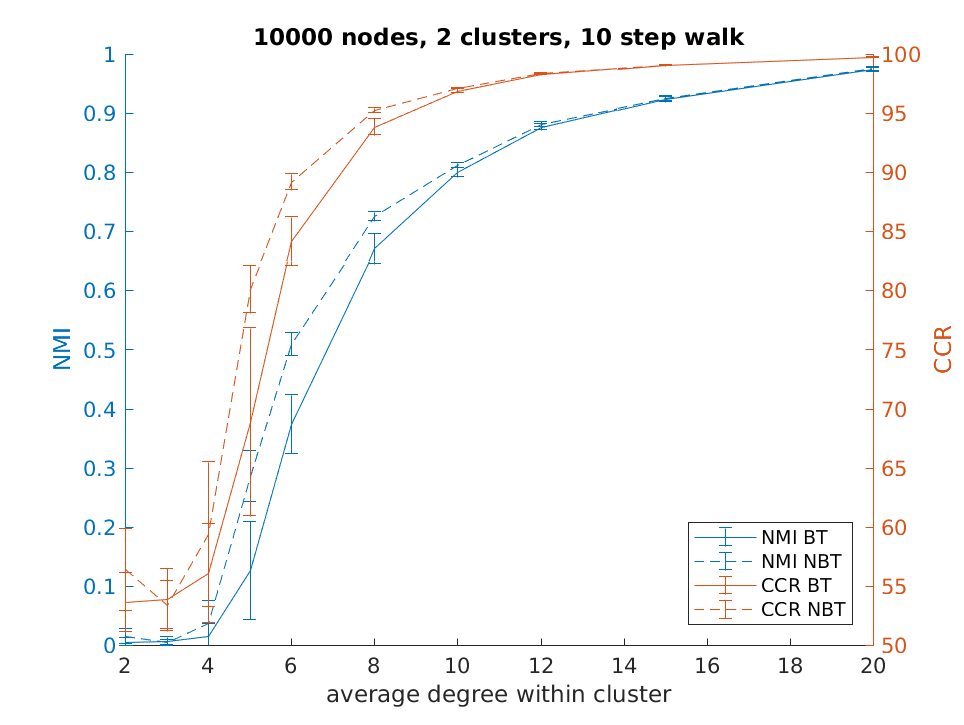}
        \caption{2 clusters}
        \label{fig:subim41}
    \end{subfigure}
    \begin{subfigure}{0.33\textwidth}
        \includegraphics[width=\linewidth]{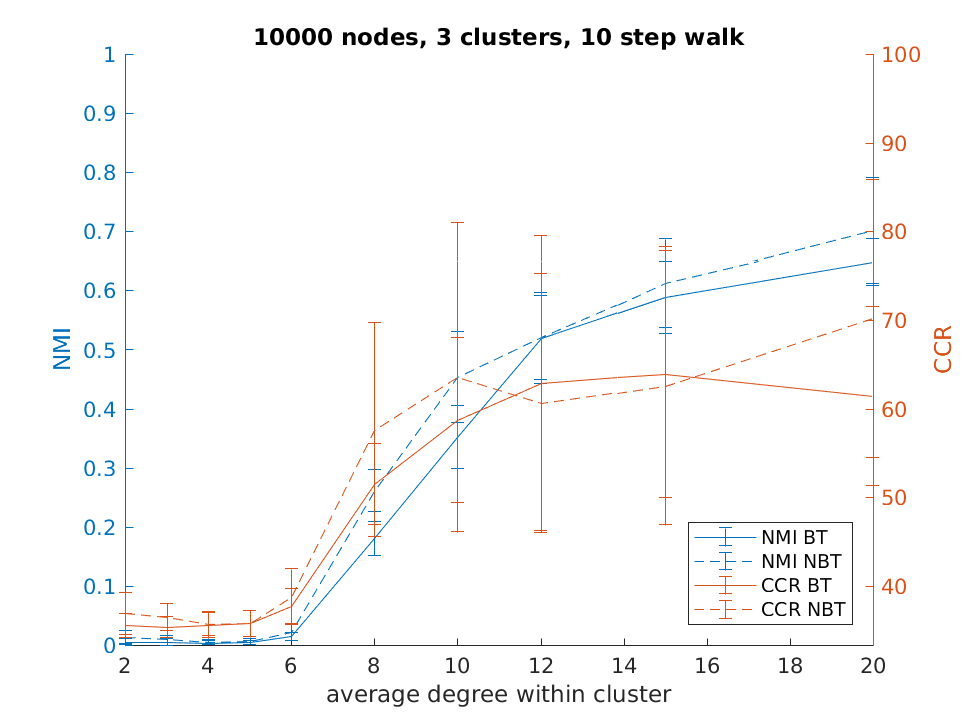}
        \caption{3 clusters}
        \label{fig:subim42}
    \end{subfigure}
    \begin{subfigure}{0.33\textwidth}
        \includegraphics[width=\linewidth]{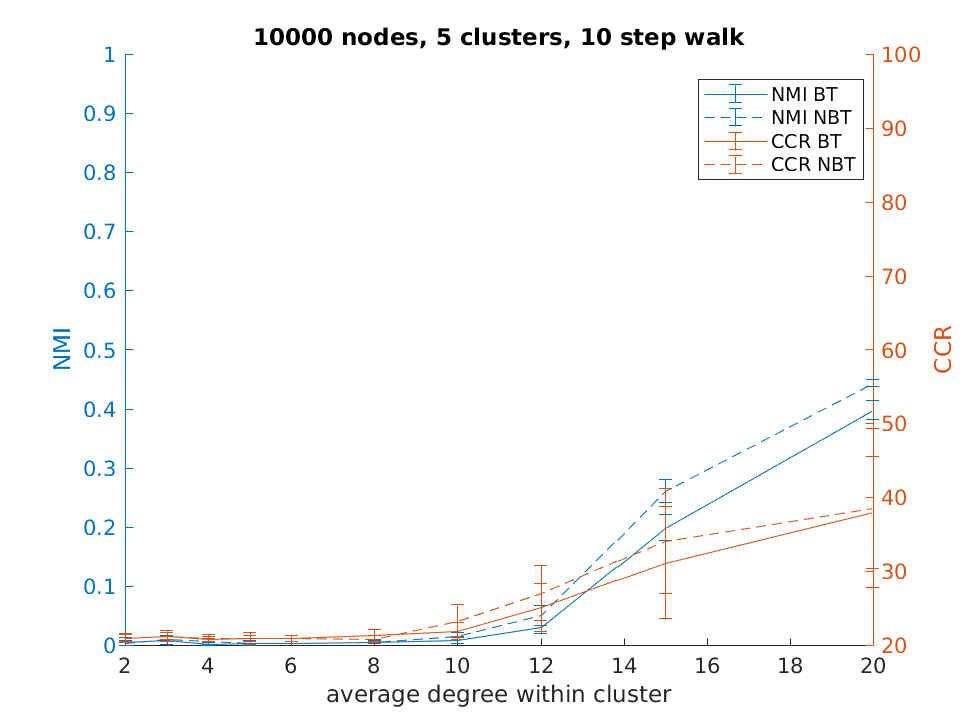}
        \caption{5 clusters}
        \label{fig:subim43}
    \end{subfigure}
    \caption{\emph{Performance of both algorithms as the number of clusters increases from 2 to 5. Performance is drastically lowered on anything larger than 2 graphs, but the improvement of using non-backtracking random walks is still clearly visible.}}
    \label{fig:image4}
\end{figure}

\vskip \baselineskip
\vskip \baselineskip

\section{Conclusion}
In this paper, we presented VEC-NBT, which is a modification of VEC \cite{NodeEmbed} using non-backtracking random walks instead of simple random walks. We show experimentally that VEC-NBT outperforms VEC for SBM model graphs across all ranges of parameters such as the number of nodes, number of clusters, and the length of the random walk, especially for sparser graphs. We discuss the theoretical basis for these results - the faster mixing rate of non-backtracking random walks compared to backtracking random walks. Finally, we analyze the connection between the begrudgingly-backtracking random walk we used and the non-backtracking random walk on which it was based. Future work will focus on formalizing our hypotheses about the behaviour of these two random walks, as well as exploring performance with other graph models which might better relate to real data.

\bibliographystyle{apalike}
\bibliography{biblio.bib}

\end{document}